\documentclass[10pt,twocolumn,letterpaper]{article}

\usepackage{titling}
\usepackage{iccv}
\usepackage{times}
\usepackage{epsfig}
\usepackage{graphicx}
\usepackage{amsmath}
\usepackage{amssymb}
\usepackage{amsthm}
\usepackage{multirow}
\usepackage{svg}
\usepackage{contour}
\usepackage{color}
\usepackage{fancyhdr}
\usepackage{setspace}

\graphicspath{{./media/}}

\usepackage{framed}
\newenvironment{myequ}[1]
   {\begin{framed}{\noindent\bfseries#1}%
     \addtolength\textwidth{20pt}\par\noindent\hspace*{-10pt}\rule{\textwidth}{.4pt}}
   {\end{framed}}


\usepackage{siunitx}
\sisetup{round-mode=places, round-precision=2}

\newtheorem{theorem}{Theorem}[section]
\newtheorem{proposition}[theorem]{Proposition}
\newtheorem{remark}[theorem]{Remark}
\newcommand{\PAR}[1]{\vskip4pt \noindent{\bf #1~}}


\usepackage[pagebackref=true,breaklinks=true,letterpaper=true,colorlinks,bookmarks=false]{hyperref}

%
%
\newif\ifarxiv
\arxivtrue

\ifarxiv
    \iccvfinalcopy
\fi

\fancyhf{}
\lfoot{{\footnotesize\begin{spacing}{.5}\parbox{\linewidth}{\vspace{2.5em}%
To appear in the Proc.~of the \emph{IEEE International Conference on Computer Vision} (ICCV). Venice, Italy, October 2017.\\\hrule\vspace{\baselineskip}
\copyright~2017 IEEE. Personal use of this material is permitted. Permission from IEEE must be obtained for all other uses, in any current or future media, including reprinting/republishing this material for advertising or promotional purposes, creating new collective works, for resale or redistribution to servers or lists, or reuse of any copyrighted component of this work in other works.%
}\end{spacing}}}

\def\eg{\emph{e.g}\onedot} 
\def\ie{\emph{i.e}\onedot} 
\newcommand{\TODO}[1]{\textcolor{red}{#1}}
\newcommand{\refFig}[1]{Figure~\ref{#1}}

\newcommand{\refEq}[1]{Equation~(\ref{#1})}
\newcommand{\refTab}[1]{Table~\ref{#1}}



\ificcvfinal\pagestyle{empty}\fi
\begin{document}

\title{Learning Proximal Operators: \\ Using Denoising Networks for Regularizing Inverse Imaging Problems}

\author{Tim Meinhardt$^1$\\
{\tt\scriptsize tim.meinhardt@tum.de}
\and Michael Moeller$^2$\\
{\tt\scriptsize michael.moeller@uni-siegen.de}
\and Caner Hazirbas$^1$\\
{\tt\scriptsize hazirbas@cs.tum.edu}
\and Daniel Cremers$^1$\\
{\tt\scriptsize cremers@tum.de} \\ \and \vspace{0.2cm}
\normalfont{Technical University of Munich$^1$ \quad University of Siegen$^2$} 
}

\ifarxiv
    \date{}
\fi

\maketitle
\thispagestyle{fancy}

\begin{abstract}
While variational methods have been among the most powerful tools for solving linear inverse problems in imaging, deep (convolutional) neural networks have recently taken the lead in many challenging benchmarks. A remaining drawback of deep learning approaches is their requirement for an expensive retraining whenever the specific problem, the noise level, noise type, or desired measure of fidelity changes. On the contrary, variational methods have a plug-and-play nature as they usually consist of separate data fidelity and regularization terms. 

In this paper we study the possibility of replacing the proximal operator of the regularization used in many convex energy minimization algorithms by a denoising neural network. The latter therefore serves as an implicit natural image prior, while the data term can still be chosen independently. Using a fixed denoising neural network in exemplary problems of image deconvolution with different blur kernels and image demosaicking, we obtain state-of-the-art reconstruction results. These indicate the high generalizability of our approach and a reduction of the need for problem-specific training. Additionally, we discuss novel results on the analysis of possible optimization algorithms to incorporate the network into, as well as the choices of algorithm parameters and their relation to the noise level the neural network is trained on.

\end{abstract}

\begin{figure}
\includegraphics[width=\linewidth]{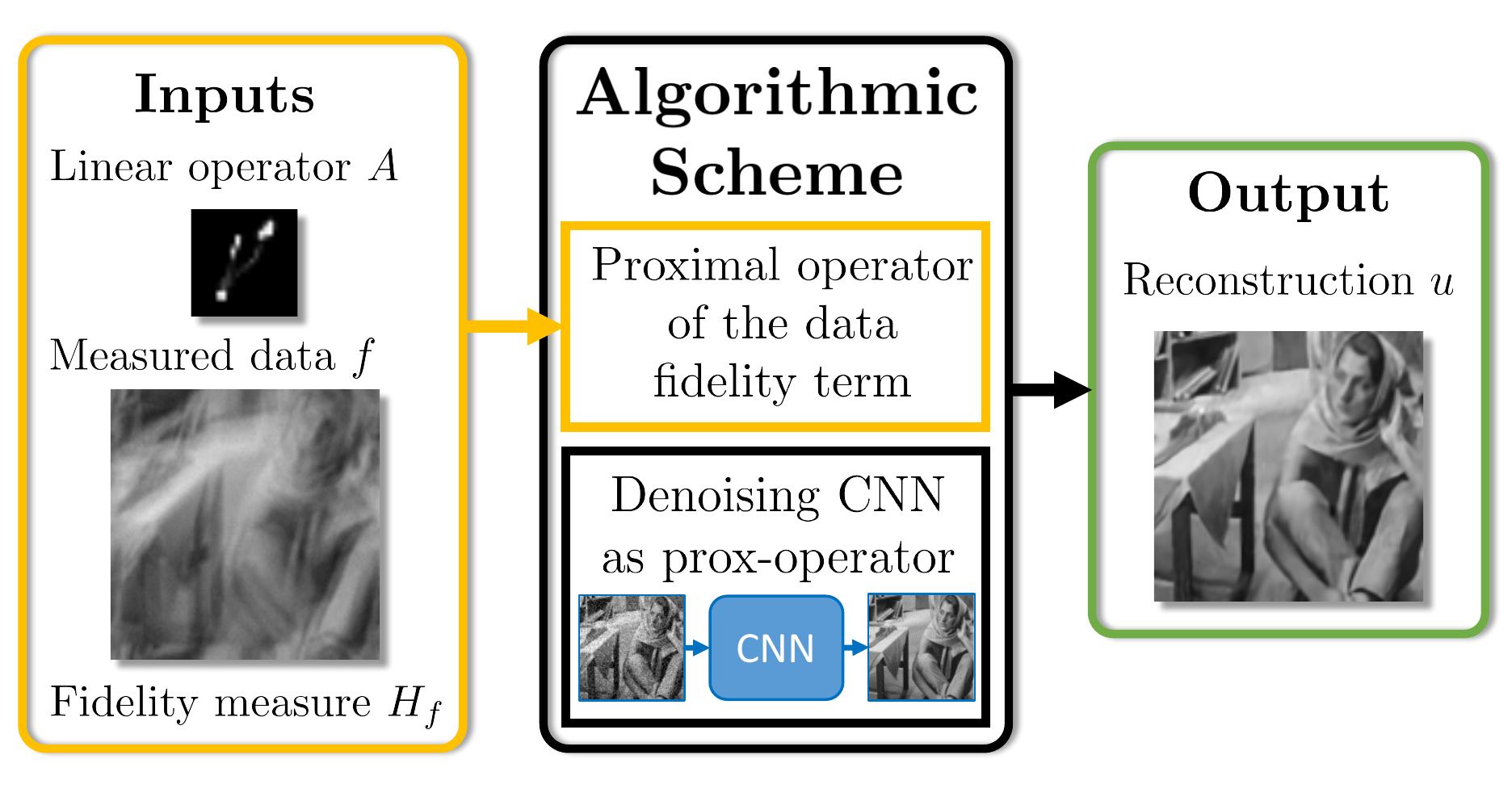}
\caption{We propose to exploit the recent advances in convolutional neural networks for image denoising for general inverse imaging problems by replacing the proximal operator in optimization algorithms with such a network. Changing the image reconstruction task, e.g. from deblurring to demosaicking, merely changes the data fidelity term such that the same network can be used over a wide range of applications without requiring any retraining.}
\label{fig:teaser}
\vspace{-0.27cm}
\end{figure}

\section{Introduction}
Many important problems in image processing and computer vision can be phrased as linear inverse problems where the desired quantity $u$ cannot be observed directly but needs to be determined from measurements $f$ that relate to $u$ via a linear operator $A$,~\ie $f = Au + n$ for some noise $n$. In almost all practically relevant applications the solution is very sensitive to the input data, and the underlying continuous problem is \text{ill-posed}. A classical but powerful general approach to obtain stable and faithful reconstructions is to use a \textit{regularization} and determine the estimated solution $\hat{u}$ via an energy minimization problem of the form
\begin{align}
    \hat{u} = \text{argmin}_u H_f(Au) + R(u).
    \label{eq:variational_method}
\end{align}
In the above, $H_f$ is a fidelity measure that relates the data $f$ to the estimated true solution $u$,~\eg $H_f(Au)=\|Au-f\|^2$ and $R$ is a regularization function that introduces a-priori information on the expected solution.

Recently, the computer vision research community has had great success in replacing the explicit modeling of energy functions in \refEq{eq:variational_method} by parameterized functions $\mathcal{G}$ that directly map the input data $f$ to a solution $\hat{u} = \mathcal{G}(f)$. Powerful architectures are so-called \textit{deep networks} that parameterize $\mathcal{G}$ by several layers of linear operations followed by certain nonlinearities,~\eg rectified linear units. The free parameters of $\mathcal{G}$ are \textit{learned} by using large amounts of training data and fitting the parameters to the ground truth data via a large-scale optimization problem.

Deep networks have had a big impact in many fields of computer vision. Starting from the first large-scale applications of convolutional neural networks (CNNs),~\eg ImageNet classification~\cite{krizhevsky12_alexnet,simonyan15_vgg,he16_resnet}, deep networks have recently been extended to high dimensional inverse problems such as image denoising~\cite{xie12_denoising,zhang16_denoising}, deblurring~\cite{xu14_deconvolution}, super-resolution~\cite{dong14_srcnn, dong16_srcnn}, optical flow estimation~\cite{dosovitsky15_flownet, mayer16_dataset}, image demosaicking~\cite{wang14_demosaicking,gharbi16_demosaicking,klatzer16_joint_demo_denoi}, or inpainting~\cite{kohler2014_mask,yeh16_inpainting}. In many cases, the performance of deep networks can be further improved when the prediction of the network is postprocessed with an energy minimization method,~\eg optical flow~\cite{guney16_deep} and stereo matching (disparity estimation)~\cite{zbontar16_stereo, chen15_stereo, luo15_stereo}.

While learning based methods yield powerful representations and are efficient in the evaluation of the network for given input data $f$, their training is often difficult. A sufficient amount of training data needs to be acquired in such a way that it generalizes well enough to the test data the network is finally used for. Furthermore, the final performance often depends on a required training and network architecture expertise which includes weight regularization~\cite{krogh92_wdecay}, dropout~\cite{srivastava14_dropout}, batch normalization~\cite{ioffe15_batchnorm}, or the introduction of ``shortcuts''~\cite{he16_resnet}. Finally, while it is very quick and easy to change the linear operator $A$ in variational methods like~\refEq{eq:variational_method}, learning based methods require a costly training as soon as the operator $A$ changes. The latter motivates the idea to combine the advantages of energy minimization methods that are flexible to changes of the data term with the powerful representation of natural images that can be obtained via deep learning.

It was observed in~\cite{venkatakrishnan_13_ppp, heide14_flexisp} that modern convex optimization algorithms for solving \refEq{eq:variational_method} merely depend on the proximal operator of the regularization $R$, which motivated the authors to replace this step by general designed denoising algorithms such as the non-local means (NLM)~\cite{buades11_nlm} or BM3D~\cite{dabov07_bm3d} algorithms. Upon preparation of this manuscript we additionally found the ArXiv report~\cite{romano16_red} which extends the ideas of~\cite{venkatakrishnan_13_ppp} and offers a detailed theoretical analysis on solving linear inverse problems by turning them into a chain of denoising steps. For the sake of completeness, we have to mention methods such as~\cite{wang16_deep_structured_models} who apply the contrary approach and use variational methods as boilerplate models to design their network architecture.

In this paper we exploit the power of learned image denoising networks by using them to replace the proximal operators in convex optimization algorithms as illustrated in \refFig{fig:teaser}. Our contributions are:
\begin{itemize}
	\item We demonstrate that using a fixed denoising network as a proximal operator in the primal-dual hybrid gradient (PDHG) method yields state-of-the-art results close to the performance of methods that trained a problem-specific network. 
    \item We analyze the possibility to use different optimization algorithms for incorporating neural networks and show that the fixed points of the resulting algorithmic schemes coincide.
    \item We provide new insights about how the final result is influenced by the algorithm's step size parameter and the denoising strength of the neural network.
\end{itemize}
%
%
\section{Related work}
Classical variational methods exploiting~\refEq{eq:variational_method}, use regularization functions that are designed to suppress noise while preserving important image features. One of the most famous examples is the total variation (TV)~\cite{rof92_tv} which penalizes the norm of the gradient of an image and has been shown to preserve image discontinuities.

An interesting observation is that typical convex optimization methods for~\refEq{eq:variational_method} merely require the evaluation of the \textit{proximal operator} of the regularization functional $R$,
\begin{align}
    \text{prox}_{R}(b) = \text{argmin}_u \frac{1}{2}\|u-b\|_2^2 + R(u).
    \label{eq:proximal_operator}
\end{align}
The interpretation of the proximal operator as a denoising of $b$ motivated the authors of~\cite{venkatakrishnan_13_ppp, heide14_flexisp} to replace the proximal operator of $R$ by a powerful denoising method such as NLM or BM3D. 
Theoretical results including conditions under which the alternating directions method of multipliers (ADMM) with a custom proximal operator converges were presented in \cite{Rond16,Chan17}. 

Techniques using customized proximal operators have recently been explored in several applications, e.g. Poisson denoising \cite{Rond16}, bright field electron tomography \cite{Sreehari16}, super-resolution \cite{Brifman16}, or hyperspectral image sharpening \cite{Teodoro2017}. Interestingly, the aforementioned works all focused on patch-based denoising methods as proximial operators. While \cite{Teodoro16} included a learning of a Gaussian mixture model of patches, we propose to use deep convolutional denoising networks as proximal operators, and analyze their behavior numerically as well as theoretically. 





\section{Learned proximal operators}
    \subsection{Motivation via MAP estimates}
    A common strategy to motivate variational methods like~\refEq{eq:variational_method} are maximum a-posteriori probability (MAP) estimates. One desires to maximize the conditional probability $p(u|f)$ that $u$ is the true solution given that $f$ is the observed data. One applies Bayes rule, and minimizes the negative logarithm of the resulting expression to find
    \begin{align}
    \arg\max_u ~& p(u|f) = \arg\min_u -\log\left(\frac{p(f|u) p(u)}{p(f)} \right) \\
    =& \arg\min_u\left( -\log(p(f|u)) - \log(p(u))\right).
    \end{align}
In the light of MAP estimates, the data term is well described by the forward operator $A$ and the assumed noise model. For example, if the observed data $f$ differs from the true data $Au$ by Gaussian noise of variance $\sigma^2$, it holds that $p(f|u) = \exp(-\frac{\|Au-f\|_2^2}{2\sigma^2})$, which naturally yields a squared $\ell^2$ norm as a data fidelity term.  Therefore, having a good estimate on the forward operator $A$ and the underlying noise model seems to make ``learning the data term'' obsolete.

    A much more delicate term is the regularization, which -- in the framework of MAP estimates -- corresponds to the negative logarithm of the probability of observing $u$ as an image. Assigning a probability to any possible $\mathbb{R}^{n \times m}$ matrix that could represent an image, seems extremely difficult by simple, hand-crafted measures. Although penalties like the TV are well-motivated in a continuous setting, the norm of the gradient cannot fully capture the likelihood of complex natural images. Hence, the regularization is the perfect candidate to be replaced by learning-based techniques. 

    \subsection{Algorithms for learned proximal operators}
    Motivated by MAP estimates ``learning the probability $p(u)$ of natural images'', seems to be a very attractive strategy. As learning $p(u)$ directly appears to be difficult from a practical point of view, we instead
exploit the observation of \cite{venkatakrishnan_13_ppp,heide14_flexisp} that many convex optimization algorithms for~\refEq{eq:variational_method} only require the proximal operator of the regularization.

For instance, applying a proximal gradient (PG) method to the minimization problem in~\refEq{eq:variational_method} yields the update equation
    \begin{align}
    \label{eq:proxGrad}
    u^{k+1} = \text{prox}_{\tau R}\left(u^k - \tau A^* \nabla H_f(Au^k) \right).
    \end{align}
    Since a proximal operator can be interpreted as a Gaussian denoiser in a MAP sense, an interesting idea is to replace the above proximal operator of the regularizer by a neural network $\mathcal{G}$,~\ie
    \begin{align}
    \label{eq:proxGradNeuralNet}
    u^{k+1} = \mathcal{G}\left(u^k - \tau A^* \nabla H_f(Au^k) \right).
    \end{align}
    Instead of the proximal gradient method in~\refEq{eq:proxGrad}, the plug-and-play priors considered in~\cite{venkatakrishnan_13_ppp} utilize the ADMM algorithm leading to update equations of the form
    
    \begin{align}
    \label{eq:admm1}
    u^{k+1} =& \text{prox}_{\frac{1 }{\gamma} (H_f \circ A)}\left(v^{k+1}-\frac{1 }{\gamma}y^{k}\right), \\
            \label{eq:admm2}
    v^{k+1} =& \text{prox}_{\frac{1 }{\gamma} R}\left(u^{k}+\frac{1 }{\gamma}y^k\right), \\
    \label{eq:admm3}
    y^{k+1} =& y^k + \gamma(u^{k+1} - v^{k+1}),
    \end{align}

    and consider replacing the proximal operator in~\refEq{eq:admm2} by a general denoising method such as NLM or BM3D. Replacing~\refEq{eq:admm2} by a neural network can be motivated equally.

    Finally, the authors of~\cite{heide14_flexisp} additionally consider a purely primal formulation of the primal-dual hybrid gradient method (PDHG)~\cite{PCBC-ICCV09,Esser-Zhang-Chan-10,chambollePock11}. For~\refEq{eq:variational_method} such a method amounts to update equations of the form
    \begin{align}
    \label{eq:pdhg1}
    z^{k+1} =& z^{k} + \gamma A\bar{u}^k - \gamma \text{prox}_{\frac{1}{\gamma}H_f}\left(\frac{1}{\gamma} z^{k} + A\bar{u}^k\right), \\
    \label{eq:pdhg2}
    y^{k+1} =& y^{k} + \gamma \bar{u}^k - \gamma \text{prox}_{\frac{1 }{\gamma}R}\left(\frac{1}{\gamma} y^{k} + \bar{u}^k\right), \\
    \label{eq:pdhg3}
    u^{k+1} =& u^k - \tau A^T z^{k+1} - \tau y^{k+1} ,\\
        \label{eq:pdhg4}
    \bar{u}^{k+1}=& u^{k+1} + \theta(u^{k+1} -u^{k}),
    \end{align}
if $\text{prox}_{H_f \circ A}$ is difficult to compute, or otherwise 
  \begin{align}
    \label{eq:pdhg2b}
    y^{k+1} =& y^{k} + \gamma \bar{u}^k - \gamma \text{prox}_{\frac{1 }{\gamma}R}\left(\frac{1}{\gamma} y^{k} + \bar{u}^k\right), \\
    \label{eq:pdhg3b}
    u^{k+1} =& \text{prox}_{\tau (H_f \circ A)}(u^k - \tau y^{k+1}) ,\\
    \label{eq:pdhg4b}
    \bar{u}^{k+1}=& u^{k+1} + \theta(u^{k+1} -u^{k}).
    \end{align}
In both variants of the PDHG method shown above, linear operators in the regularization (such as the gradient in case of TV regularization) can further be decoupled from the computation of the remaining proximity operator. From now on we will refer to \eqref{eq:pdhg1}--\eqref{eq:pdhg4} as PDHG1 and to \eqref{eq:pdhg2b}--\eqref{eq:pdhg4b} as PDHG2. 

 Again, the authors of~\cite{heide14_flexisp} considered replacing the proximal operator in update~\refEq{eq:pdhg2} or~\refEq{eq:pdhg2b} by a BM3D or NLM denoiser, which -- again -- motivates replacing such a designed algorithm by a learned network $\mathcal{G}$,~\ie
    \begin{align}
    \label{eq:pdhg2NN}
    y^{k+1}= y^{k} + \gamma \bar{u}^k - \gamma~ \mathcal{G}\left(\frac{1}{\gamma} y^{k} + \bar{u}^k\right).
    \end{align}
    A natural question is which of the algorithms PG, ADMM, PDHG1, or PDHG2 should be used together with a denoising neural network? The convergence of any of the four algorithms can only be guaranteed for sufficiently friendly convex functions, or in some nonconvex settings under specific additional assumptions. The latter is an active field of research such that analyzing the convergence even beyond nonconvex functions goes beyond the scope of this paper. We refer the reader to \cite{Rond16,Chan17} for some results on the convergence of ADMM with customized proximal operators. 
    
    We will refer to the proposed method as an \textit{algorithmic scheme} in order to indicate that a proximal operator has been replaced by a denoising network. Despite this heuristics, our numerical experiments as well as previous publications indicate that the modified iterations remain stable and converge in a wide variety of cases. Therefore, we investigate the fixed-points of the considered schemes. Interestingly, the following remark shows that the set of fixed-points does not differ for different algorithms.
    \begin{remark}
    \label{prop:stationaryPoints}
    Consider replacing the proximal operator of $R$ in the PG, ADMM, PDHG1, and PDHG2 methods by an arbitrary continuous function $\mathcal{G}$. Then the fixed-point equations of all four resulting algorithmic schemes are equivalent, and yield  
%
    \begin{align}
    \label{eq:fixedPoint}
     u_* = \mathcal{G}\left(u_* - t A^T\nabla H_f (Au_*)\right)
    \end{align}
    with $* \in \{ \text{PG}, \text{ADMM}, \text{PDHG1}, \text{PDHG2}\}$ and $t=\tau$ for PG and PDHG2, and $t=\frac{1}{\gamma}$ for ADMM and PDHG1. 
\end{remark}
    \begin{proof}
    See supplementary material. 
    \end{proof}

\subsection{Parameters for learned proximal operators}    
    One key question when replacing a proximity operator of the form $\text{prox}_{\frac{1}{\gamma} R}$  by a Gaussian denoising operator, is the relation between the step size $\gamma$ and the noise standard deviation $\sigma$ used for the denoiser. Note that $\text{prox}_{\frac{1}{\gamma} R}$ can be interpreted as a MAP estimate for removing zero-mean Gaussian noise with standard-deviation $\sigma = \sqrt{\gamma}$ (as also shown in~\cite{venkatakrishnan_13_ppp}). Therefore, the authors of~\cite{heide14_flexisp} used the PDHG algorithm with a BM3D method as a proximal operator in~\refEq{eq:pdhg2b} and adopted the BM3D denoising strength according to the relation $\sigma = \sqrt{\gamma}$. While algorithms like BM3D allow to easily choose the denoising strength, a neural network is less flexible as an expensive training is required for each choice of denoising strength. 

An interesting insight can be gained by using the algorithmic scheme arising from the PDHG2 algorithm with stepsize $\tau = \frac{c}{\gamma}$ for some constant $c$, and the proximity operator of the regularization being replaced by an arbitrary function $\mathcal{G}$. In the case of convex optimization, i.e. the original PDHG2 algorithm, the constant $c$ resembles the stability condition that $\tau \gamma$ has to be smaller than the squared norm of the involved linear operator. After using $\mathcal{G}$ instead of the proximal mapping, the resulting algorithmic scheme becomes
 \begin{align}
 \label{eq:pdhg2c}
     y^{k+1} =& y^{k} + \gamma \bar{u}^k - \gamma~ \mathcal{G}\left(\frac{1}{\gamma} y^{k} + \bar{u}^k\right),\\
      \label{eq:pdhg3c}
    u^{k+1} =& \text{prox}_{\frac{c}{\gamma} (H_f \circ A)}(u^k - \frac{c}{\gamma} y^{k+1}) ,\\
      \label{eq:pdhg4c}
    \bar{u}^{k+1}=& u^{k+1} + \theta(u^{k+1} -u^{k}).
    \end{align}
We can draw the following simple conclusion:
\begin{proposition}
\label{prop:stepsizeInvariance}
Consider the algorithmic scheme given by Equations \eqref{eq:pdhg2c}--\eqref{eq:pdhg4c}. Then any choice of $\gamma>0$ is equivalent to $\gamma = 1$ with a newly weighted data fidelity term $\tilde{H}_f = \frac{1}{\gamma} H_f$. In other words, changing the step size $\gamma$ merely changes the data fidelity parameter.
\end{proposition}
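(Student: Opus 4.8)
The plan is to exhibit an explicit change of variables on the dual iterate that converts the scheme \eqref{eq:pdhg2c}--\eqref{eq:pdhg4c} with parameter $\gamma$ into the very same scheme with $\gamma=1$, at the price of replacing $H_f$ by $\tilde H_f=\frac1\gamma H_f$, while leaving the primal iterates $u^k$ and $\bar u^k$ literally unchanged.

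First I would introduce the substitution $\tilde y^k:=\frac1\gamma y^k$, together with the matching initialization $\tilde y^0=\frac1\gamma y^0$. Dividing the dual update \eqref{eq:pdhg2c} by $\gamma$ gives
\begin{align}
\tilde y^{k+1}=\tilde y^{k}+\bar u^{k}-\mathcal G\!\left(\tilde y^{k}+\bar u^{k}\right),
\end{align}
which is exactly \eqref{eq:pdhg2c} with $\gamma$ set to $1$ and $y$ renamed $\tilde y$.

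Next I would rewrite the primal update \eqref{eq:pdhg3c}. Since $\frac{c}{\gamma}y^{k+1}=c\,\tilde y^{k+1}$, and using the elementary identity $\frac{c}{\gamma}(H_f\circ A)=c\big((\tfrac1\gamma H_f)\circ A\big)=c\,(\tilde H_f\circ A)$ inside the definition \eqref{eq:proximal_operator} of the proximal operator, \eqref{eq:pdhg3c} becomes
\begin{align}
u^{k+1}=\text{prox}_{c\,(\tilde H_f\circ A)}\!\left(u^{k}-c\,\tilde y^{k+1}\right),
\end{align}
i.e. \eqref{eq:pdhg3c} with $\gamma=1$ (so that $\frac{c}{\gamma}=c$) and $H_f$ replaced by $\tilde H_f$. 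The extrapolation step \eqref{eq:pdhg4c} involves neither $\gamma$ nor the dual variable, so it carries over verbatim. Assembling these three observations shows that the sequences $(u^k)$ and $(\bar u^k)$ produced by the $\gamma$-scheme for $H_f$ coincide with those produced by the $\gamma=1$ scheme for $\tilde H_f$, the only difference being the bookkeeping relation $y^k=\gamma\,\tilde y^k$; in particular the two schemes share the same fixed points and the same primal limits.

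I do not expect a genuine obstacle here — the argument is a one-line rescaling of the dual variable. The only points that need a little care are matching the dual initializations under the rescaling and invoking the positive homogeneity of the proximal operator in its scalar weight, both of which are immediate from \eqref{eq:proximal_operator}.
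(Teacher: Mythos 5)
Your proposal is correct and follows essentially the same route as the paper's own proof: rescale the dual variable via $\tilde y^k=\frac{1}{\gamma}y^k$, divide the dual update by $\gamma$, and absorb the remaining $\frac{1}{\gamma}$ into the data term as $\tilde H_f=\frac{1}{\gamma}H_f$. Your extra care about matching the dual initializations mirrors the paper's own remark that the iterates coincide only with a consistent initialization (e.g.\ $y^0=0$).
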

\begin{proof}
We divide~\refEq{eq:pdhg2c} by $\gamma$ and define $\tilde{y}^k = \frac{1}{\gamma}y^k$. The resulting algorithm becomes
\begin{align}
     \tilde{y}^{k+1} =& \tilde{y}^{k} + \bar{u}^k - \mathcal{G}\left(\tilde{y}^k + \bar{u}^k\right),\\
    u^{k+1} =& \text{prox}_{c (\tilde{H}_f \circ A)}(u^k - c ~\tilde{y}^{k+1}) ,\\
    \bar{u}^{k+1}=& u^{k+1} + \theta(u^{k+1} -u^{k}),
    \end{align}
which yields the assertion. 
\end{proof}
We'd like to point out that Proposition~\ref{prop:stepsizeInvariance} states the equivalence of the update equations. For the iterates to coincide one additionally needs the initialization $y^0=0$. 

Interestingly, similar results can be obtained for any of the four schemes discussed above. As a conclusion, the specific choice of the step sizes $\tau$ and $\sigma$ does not matter, as they simply rescale the data fidelity term, which should have a free tuning parameter anyway. 


Besides the step sizes $\tau$ and $\sigma$, an interesting question is how the denoising strength of a neural network $\mathcal{G}$ relates to the data fidelity parameter. In analogy to MAP estimates above, one could expect that increasing the standard deviation $\sigma$ of the noise the network is trained on by a factor of $a$, requires the increase of the data fidelity parameter by a factor of $a^2$ in order to obtain equally optimal results. 

\begin{figure}[ht!]
\centering
\includegraphics[width=0.95\linewidth]{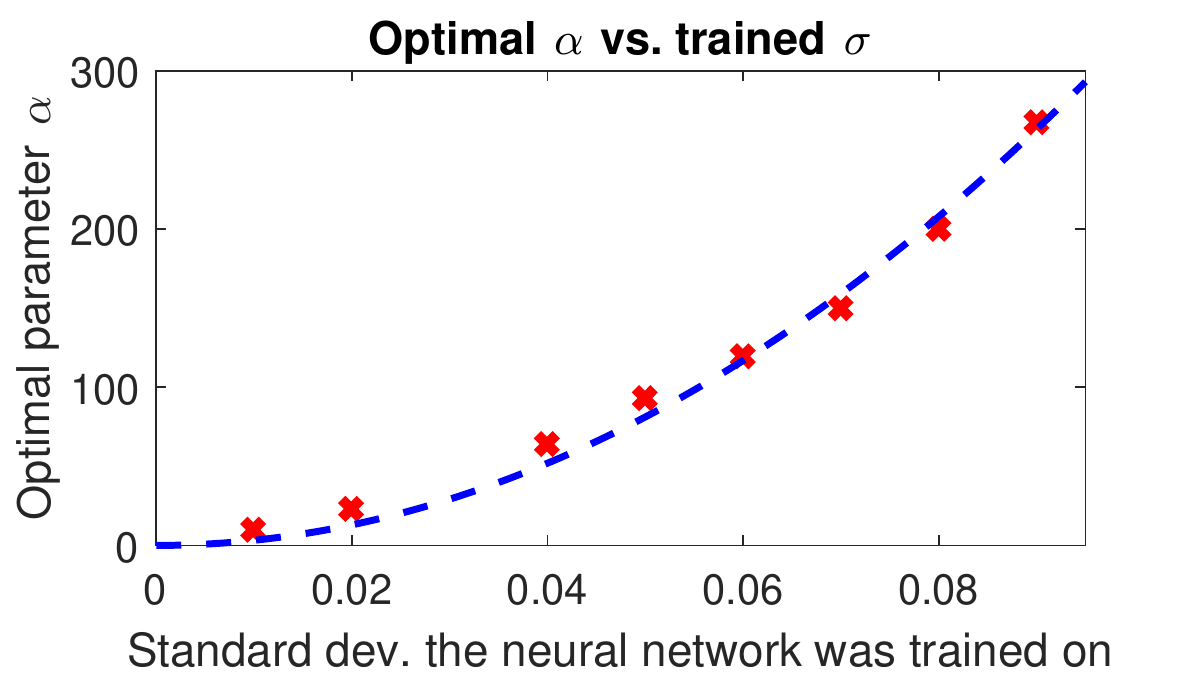}\\\vspace{-0.75cm}

\includegraphics[width=0.95\linewidth]{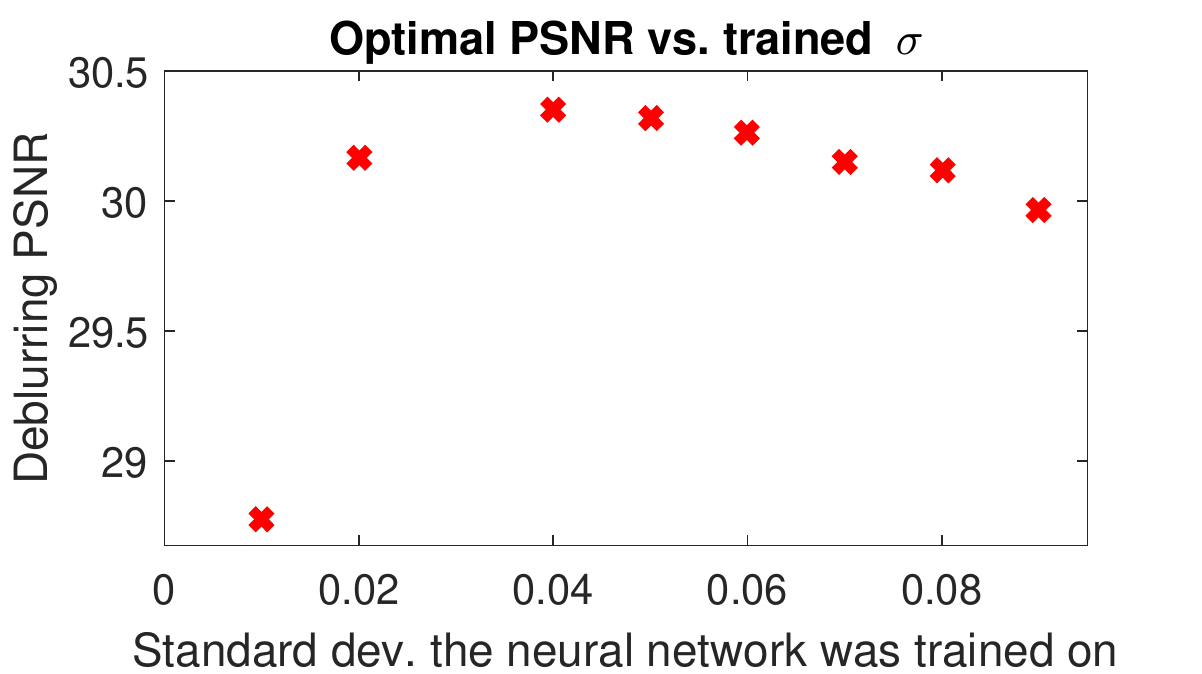}
\caption{The same deconvolution experiment was run with denoising networks trained on noise with different standard deviations $\sigma$ as proximal operators. The first plot shows the optimal data fidelity parameter $\alpha$ as a function of $\sigma$ and the dashed blue curve is the best quadratic fit. It verifies the expected theoretical quadratic relation between the data fidelity parameter and denoising strength. The second plot shows the corresponding achieved PSNR values (for optimally tuned data fidelity parameters) as a function of $\sigma$. We can see that the PSNR is quite stable over a large range of sufficiently large denoising strengths.}
\label{fig:parameterDependence}
\vspace{-0.27cm}
\end{figure}


To test such an hypothesis we run several different deconvolution experiments with the same input data, but different neural networks which all differ by the standard deviation $\sigma$ they have been trained on. We use a data fidelity term of the form $\frac{\alpha}{2}\|Au-f\|_2^2$ for a blur operator $A$, and data fidelity parameter $\alpha$.  We then run an exhaustive search for the best parameter $\alpha$ maximizing the PSNR value for each of the different neural networks. The first plot of Figure \ref{fig:parameterDependence} illustrates the optimal data fidelity parameter $\alpha$ as a function of the standard deviation $\sigma$ the corresponding neural network has been trained on. Interestingly, the dependence of the optimal $\alpha$ on $\sigma$ indeed seems to be well approximated by a parabola, as illustrated by the dashed blue line representing the curve $\alpha = p~\sigma^2$ for an optimal $p$. 

It is important to note that while in the convex optimization setting a rescaling of both, regularization and data fidelity parameter, does not change the final result at all, the results obtained at each of the data points shown in the first part of Figure~\ref{fig:parameterDependence} do differ as illustrated in the second plot. While a network trained on very small noise did not give good results, a sufficiently large standard deviation gives good results over a large range of training noise level $\sigma$. 

Please also note that similar choices (data fidelity parameter and strength of the denoising algorithm) have to be made for any other custom denoising algorithm: As discussed above, the authors of~\cite{heide14_flexisp} proposed to make the BM3D denoising strength step size depended.~\cite{romano16_red} also considers the use of neural networks as proximal operators, but similar to \cite{heide14_flexisp}, the authors of~\cite{romano16_red} try to make the denoising strength step size dependent. However, since the denoising strength of a neural network cannot be adapted as easily as for the BM3D algorithm, the authors rely on the assumption that a rescaling of the input data which is fed into the network allows to adapt the denoising strength. Instead we propose to rather fix the denoising strength, which -- according to Proposition~\ref{prop:stepsizeInvariance} -- then allows us to fix the algorithm step size $\gamma = 1$ and control the smoothness of the final result by adapting the data fidelity parameter. This avoids the problem of the aforementioned approaches that the internal step size parameter $\gamma$ of the algorithmic scheme influences the result and therefore becomes a (difficult-to-tune) hyperparameter.

\section{Numerical implementation}
\subsection{Algorithmic framework and prior stacking}
In the following section we describe how we implemented the proposed algorithmic scheme with a neural network replacing a proximal operator.

According to Remark~\ref{prop:stationaryPoints} the potential fixed-points of any of the schemes are the same.
In comparison to the PG method, the PDHG algorithm has the advantage that it can easily combine learned (neural network) priors (which have no associated cost function term and thus are referred to as \textit{implicit priors}) with explicitly modeled priors that can be tailored to specific applications -- a fact that has first been exploited by the authors of~\cite{heide14_flexisp} in a technique termed \textit{prior stacking}, which we utilize in our experiments as well.

A combination, or \textit{stacking}, of different priors can easily be achieved in the PDHG algorithm by introducing multiple variables: If we consider all variables in their vectorized form, our final algorithmic scheme is given by 
\begin{align}
    \label{eq:ourPdhg2}
    z^{k+1} =& z^{k} + \gamma D\bar{u}^k - \gamma\text{prox}_{\frac{\beta}{\gamma}J}\left(\frac{1}{\gamma} z^{k} + D\bar{u}^k\right), \\
    y^{k+1} =& y^{k} + \gamma \bar{u}^k - \gamma\mathcal{G}\left(\frac{1}{\gamma} y^{k} + \bar{u}^k\right), \\
    \label{eq:ourPdhg3}
    u^{k+1} =& \text{prox}_{\tau \alpha(H_f \circ A)}(u^k - \tau y^{k+1}- \tau D^Tz^{k+1}) ,\\
    \bar{u}^{k+1}=& 2u^{k+1} -u^{k},
    \end{align}
    where $D$ is an arbitrary linear operator (e.g. the discretized gradient in the case of TV regularization), $J$ an additional regularization (e.g. $J(Du) = \|Du\|_{2,1}$ for the TV), $\beta$ is a regularization parameter, $\alpha$ is the data fidelity parameter, and we use $(H_f\circ A)(u) =\frac{1}{2}\|Au-f\|_2^2$ for a linear operator $A$. We now have two variables $z$ and $y$, which implement the network $\mathcal{G}$ and an additional regularization $J$, where the regularization $J$ may again consist of multiple priors. For more details on prior stacking we refer the reader to \cite{heide14_flexisp}.

Please note that our result of Proposition~\ref{prop:stepsizeInvariance} can easily be extended to the above algorithm, where an arbitrary $\gamma = \frac{c}{\tau}$ can be eliminated via $\beta \rightarrow \frac{\beta}{\gamma}$, $\alpha \rightarrow \frac{\alpha}{\gamma}$, with $c$ (usually) denoting the operator norm $\|[I, -D^T]\|^2$. Consequently, we again only have to optimize for the data fidelity and regularization parameters unless one considers even the product $c=\tau \gamma$ of the step sizes as a free parameter. For the sake of clarity and similarity to the convex optimization case, we decided not to pursue this direction.

    \subsection{Deep convolutional denoising network}
    \label{sec:denoisingNetwork}
    In order to make our denoising network benefit from the recent advances in learning based problem solving we use an end-to-end trained deep convolutional neural network (CNN). Our network architecture of choice is similar to \textit{DnCNN-S}~\cite{zhang16_denoising} and composed of 17 convolution layers with a kernel size of 3$\times$3 each of which is followed by a rectified linear unit (ReLU). 
    Input of the network is either a gray-scale or a color image depending on the application. We use the training pipeline identical to~\cite{zhang16_denoising} with the Adam optimization algorithm \cite{adam} and train our network for removing Gaussian noise of a fixed standard deviation $\sigma$. \refTab{table:evaluation_denoising} demonstrates the superior performance of our learned denoising operator in comparison with general denoising algorithms such as NLM and BM3D on a range of different $\sigma$. It should be noted that each $\sigma$ requires an individually trained \textit{DnCNN-S}. Although we used different noise levels than the one presented in~\cite{zhang16_denoising}, our results have similar margins to BM3D indicating that our trained networks represent state-of-the-art denoising methods. 

    
    \begin{table}

\caption{Average PSNRs in [dB] for 11 test images for different standard deviations $\sigma$ of the Gaussian noise in a comparison of NLM, BM3D, and our denoising networks using the DnCNN-S architecture proposed in~\cite{zhang16_denoising}. We used the same test images as in our deconvolution experiments.}
\vspace{-0.27cm}

\begin{center}
\begin{tabular}{|*{5}{S|}}
\hline
{$\sigma$}     &   {Noisy} &     {NLM~\cite{buades11_nlm}} &    {BM3D~\cite{dabov07_bm3d}} &   {DnCNN-S} \\
\hline
    0.02 & 33.9876 & 35.4869 & 36.7699 &  \textbf{37.80} \\
    0.03 & 30.4691 & 32.7279 & 34.1417 &  \textbf{35.26} \\
    0.04 & 27.9926 & 31.0369 & 32.4908 &  \textbf{33.52}  \\
    0.05 & 26.0554 & 29.7883 & 31.1586 &  \textbf{32.15} \\
    0.06 & 24.4989 & 28.936  & 30.1337 &  \textbf{31.13} \\
    0.07 & 23.1873 & 28.2675 & 29.223  &  \textbf{30.20} \\
    0.08 & 22.0789 & 27.5197 & 28.5722 &  \textbf{29.48} \\
    0.09 & 21.0043 & 26.9394 & 27.8891 &  \textbf{28.81}  \\
    0.1  & 20.1405 & 26.3712 & 27.4078 &  \textbf{28.10} \\
\hline
\end{tabular}
\end{center}

\label{table:evaluation_denoising}
\vspace{-0.27cm}
\end{table}

\section{Evaluation}
\label{sec:evaluation}
The general idea of using neural networks instead of proximal operators applies to any image reconstruction task. We demonstrate the effectiveness of this approach on the exemplary problems of image deconvolution and Bayer demosaicking. It is important to note that we keep the neural network fixed throughout the entire numerical evaluation. In particular, the network has neither been specifically trained for deconvolution nor for demosaicking, but only on removing Gaussian noise with a fixed noise standard deviation of $\sigma_f = 0.02$.

For a direct comparison we follow the experimental setup of~\cite{heide14_flexisp}, but reimplemented the problems using the problem agnostic modeling language for image optimization problems \textit{ProxImaL}~\cite{heide16_proximal}. For the denoising network we used the graph computation framework \textit{TensorFlow}~\cite{tensorflow2015_whitepaper} which made the integration simple and flexible.~\footnote{Our code is available at~\url{https://github.com/tum-vision/learn_prox_ops}.} Since our approach stands in direct comparison to~\cite{heide14_flexisp}, we have to mention that we were not able to reproduce their results with our implementation. This is likely due to them replacing the proximal operator with an improved but not released version of BM3D which was even further refined for the case of demosaicking. In this paper, our main goal is to compare our approach with the framework of \cite{heide14_flexisp} as methods that are not tailored to a specific problem but provide solutions for any linear inverse problem. Therefore, we use the publicly available BM3D implementation, perform a grid search over all free parameters, and denote the obtained results in our evaluation by \textit{FlexISP$^*$}. The latter allows us to investigate to what extend the advantage in denoising performance shown in~\refTab{table:evaluation_denoising} transfers to general inverse problems. Of course, approaches that are tailored to a specific problem, e.g. by training a specialized network, will likely yield superior performance. 

\textit{FlexISP$^*$} applies the same step size related denoising approach as \cite{heide14_flexisp}, but in contrast to ~\cite{heide14_flexisp} we observed a notable effect of the choice of $\gamma$ and therefore included it in the parameter optimization. We set the same residual-based stopping criterion as well as a maximum number of 30 PDHG iterations for \textit{FlexISP$^*$} and our approach. 

    \subsection{Demosaicking}
    \begin{figure*}[t]
\contourlength{1.0pt}
\setlength{\unitlength}{0.16\textwidth}
\centering

\setlength{\fboxsep}{0pt}%
\setlength{\fboxrule}{0.01\unitlength}%

\newcommand{\img}[1]{
    \ifarxiv
        \includegraphics[width=\unitlength]{image_comparison_demosaicking/#1_small}
    \else
        \includegraphics[width=\unitlength]{image_comparison_demosaicking/#1_small}
    \fi
}

\newcommand{\imgdiff}[1]{
    \ifarxiv
        \fcolorbox{white}{white}{\includegraphics[trim={200 175 200 225},clip,width=0.4\unitlength]{image_comparison_demosaicking/#1_small}}
    \else
        \fcolorbox{white}{white}{\includegraphics[trim={200 175 200 225},clip,width=0.4\unitlength]{image_comparison_demosaicking/#1_small}}
    \fi
}

\newcommand{\imgdifff}[1]{
    \ifarxiv
        \fcolorbox{white}{white}{\includegraphics[trim={350 250 50 150},clip,width=0.4\unitlength]{image_comparison_demosaicking/#1_small}}
    \else
        \fcolorbox{white}{white}{\includegraphics[trim={350 250 50 150},clip,width=0.4\unitlength]{image_comparison_demosaicking/#1_small}}
    \fi
}

\subfloat{
\begin{picture}(1,1) 
\put(0,0){\img{1_ground_truth}} 
\put(0.75,0.89){\makebox(0,0)[B]{\footnotesize \contour{black}{\textcolor{white}{Original}}}} 
\end{picture}

\begin{picture}(1,1) 
\put(0,0){\img{1_dcraw}} 
\put(0.05,0.54){\imgdiff{1_dcraw_diff}}
\put(0.5,0.1){\makebox(0,0)[B]{\footnotesize \contour{black}{\textcolor{white}{28.49 dB}}}} 
\put(0.75,0.89){\makebox(0,0)[B]{\footnotesize \contour{black}{\textcolor{white}{DCRAW}}}} 
\end{picture}

\begin{picture}(1,1) 
\put(0,0){\img{1_adobe}} 
\put(0.05,0.54){\imgdiff{1_adobe_diff}}
\put(0.5,0.1){\makebox(0,0)[B]{\footnotesize \contour{black}{\textcolor{white}{28.18 dB}}}} 
\put(0.75,0.89){\makebox(0,0)[B]{\footnotesize \contour{black}{\textcolor{white}{ADOBE}}}} 
\end{picture}

\begin{picture}(1,1) 
\put(0,0){\img{1_ldi-nat}} 
\put(0.05,0.54){\imgdiff{1_ldi-nat_diff}}
\put(0.5,0.1){\makebox(0,0)[B]{\footnotesize \contour{black}{\textcolor{white}{29.18 dB}}}} 
\put(0.75,0.89){\makebox(0,0)[B]{\footnotesize \contour{black}{\textcolor{white}{LDI-NAT}}}} 
\end{picture}

\begin{picture}(1,1) 
\put(0,0){\img{1_flexispstar}} 
\put(0.05,0.54){\imgdiff{1_flexispstar_diff}}
\put(0.5,0.1){\makebox(0,0)[B]{\footnotesize \contour{black}{\textcolor{white}{28.93 dB}}}} 
\put(0.75,0.89){\makebox(0,0)[B]{\footnotesize \contour{black}{\textcolor{white}{FlexISP*}}}} 
\end{picture}

\begin{picture}(1,1) 
\put(0,0){\img{1_ours}}
\put(0.05,0.54){\imgdiff{1_ours_diff}}
\put(0.5,0.1){\makebox(0,0)[B]{\footnotesize \contour{black}{\textcolor{white}{29.38 dB}}}} 
\put(0.75,0.89){\makebox(0,0)[B]{\footnotesize \contour{black}{\textcolor{white}{Ours}}}} 
\end{picture}

} 

\subfloat{
\begin{picture}(1,1)
\put(0,0){\img{5_ground_truth}}
\end{picture}

\begin{picture}(1,1)
\put(0,0){\img{5_dcraw}}
\put(0.05,0.54){\imgdifff{5_dcraw_diff}}
\put(0.5,0.1){\makebox(0,0)[B]{\footnotesize \contour{black}{\textcolor{white}{33.65 dB}}}} 
\end{picture}

\begin{picture}(1,1)
\put(0,0){\img{5_adobe}}
\put(0.05,0.54){\imgdifff{5_adobe_diff}}
\put(0.5,0.1){\makebox(0,0)[B]{\footnotesize \contour{black}{\textcolor{white}{31.98 dB}}}} 
\end{picture}

\begin{picture}(1,1)
\put(0,0){\img{5_ldi-nat}}
\put(0.05,0.54){\imgdifff{5_ldi-nat_diff}}
\put(0.5,0.1){\makebox(0,0)[B]{\footnotesize \contour{black}{\textcolor{white}{33.92 dB}}}} 
\end{picture}

\begin{picture}(1,1)
\put(0,0){\img{5_flexispstar}}
\put(0.05,0.54){\imgdifff{5_flexispstar_diff}}
\put(0.5,0.1){\makebox(0,0)[B]{\footnotesize \contour{black}{\textcolor{white}{34.22 dB}}}} 
\end{picture}

\begin{picture}(1,1)
\put(0,0){\img{5_ours}}
\put(0.05,0.54){\imgdifff{5_ours_diff}}
\put(0.5,0.1){\makebox(0,0)[B]{\footnotesize \contour{black}{\textcolor{white}{34.89 dB}}}} 
\end{picture}

}

\caption{Visual comparison of different demosaicking methods on two example images of the McMaster color image data set. To illustrate the differences in reconstruction quality we added zoomed in residual images. Apart from \textit{FlexISP}$^*$ and our result, all other images are taken from \cite{heide14_flexisp}.}
\label{fig:image_comparison_demosaicking}
\vspace{-0.27cm}
\end{figure*}

    We evaluated our performance on noise-free demosaicking of the Bayer filtered \textit{McMaster} color image dataset,~\cite{zhang2011_mcmaster}. Besides our denoising network, we use the cross-channel and total variation prior as additional explicit regularizations $J$ in~\refEq{eq:ourPdhg2} as also done in~\cite{heide14_flexisp}. For \textit{FlexISP$^*$} as well as for our method we optimized in an exhaustive grid search for the data fidelity parameter $\alpha$ as well as for the regularization parameters $\beta_{TV}$ and $\beta_{Cross}$.

    \begin{figure}[t]
\begin{center}
  \includegraphics[width=\linewidth]{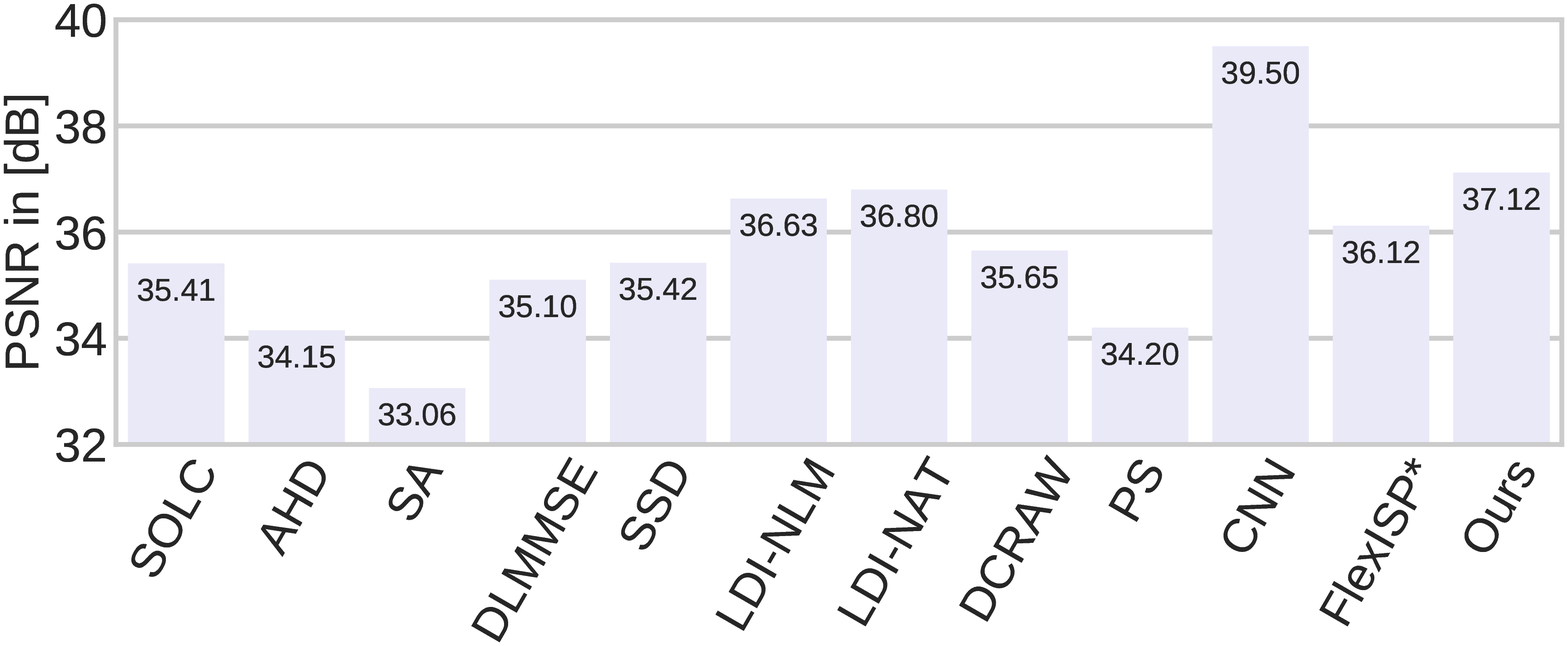}
\end{center}
\caption{Average PSNR results in [dB] for demosaicking the McMaster color image dataset. The results of all methods except CNN, \textit{FlexISP$^*$} and ours are copied from the same comparison in \cite{heide14_flexisp}. As expected the deep CNN from \cite{gharbi16_demosaicking} which was specifically trained on demosaicking outperforms our approach. Nevertheless the results show that using a powerful denoising network as a proximal operator yields substantial results.}
\label{fig:evaluation_demosaicking}
\vspace{-0.27cm}
\end{figure}
    
    \refFig{fig:evaluation_demosaicking} compares our average debayering quality with multiple state-of-the-art algorithms, and \refFig{fig:image_comparison_demosaicking} gives a visual impression of the demosaicking quality of the corresponding algorithms for two example images. 
    As we can see, 
    the proposed method achieves a very high average PSNR value and is only surpassed by \cite{gharbi16_demosaicking} who specifically trained a deep demosaicking CNN. Comparing our approach with \textit{FlexISP$^*$}, the advantage of about $1$dB in PSNR values of our network over BM3D on image denoising carried over to the inverse problem of demosaicking. 

    To justify our choice of a fixed $\sigma_f$ we investigate the robustness of our approach to different choices of denoising networks. \refTab{table:evaluation_demosaicking_params} illustrates the results of our method for differently trained networks, and also shows the optimal parameters found by our grid search. While we can see that the PSNRs do vary by about $1.1$dB, it is encouraging to see that the average PSNR remains above $36$dB for a wide range of differently trained networks. A little less conclusive are the optimal parameters found by our grid search. They merely seem to indicate that explicit priors should be used less if the denoising network is trained on larger noise levels. We also tested completely omitting explicit priors, which decreased the average performance by about $0.4$dB.
    


    \begin{table}[t]

\caption{The table shows the optimal parameters for the data fidelity parameter $\alpha$, the TV regularization $\beta_{TV}$ and the cross channel prior $\beta_{Cross}$ when denoising networks trained on Gaussian noise with different standard deviation $\sigma$ are used. Below the parameters we show the average PSNR values in [dB] obtained on the McMaster color image data set. Considering the results of competing methods shown in \refFig{fig:evaluation_demosaicking}, different denoising networks yield quite good demosaicking performance on a wide range of different $\sigma$.}
\vspace{-0.27cm}
\footnotesize
\begin{center}
\begin{tabular}{|c *{3}{|c} |}
\hline
\multirow{2}{*}{$\sigma$}       & \multicolumn{3}{c|}{Reconstruction PSNR in [\textit{dB}]} \\
                                \cline{2-4}
                                & {$\alpha$} & {$\beta_{TV}$} & {$\beta_{Cross}$} \\

\hline
\hline
\multirow{2}{*}{$0.001$}          & \multicolumn{3}{S|}{36.050674}\\
                                  \cline{2-4}
                                  & 4000 & 0.1 & 0.05 \\
\hline
\hline
\multirow{2}{*}{$0.01$}           & \multicolumn{3}{S|}{36.740009}\\
                                  \cline{2-4}
                                  & 100 & 0.01 & 0.0 \\
\hline
\hline
\multirow{2}{*}{$0.02$}           & \multicolumn{3}{S|}{37.123543}\\
                                  \cline{2-4}
                                  & 90 & 0.01 & 0.0 \\
\hline
\hline
\multirow{2}{*}{$0.03$}           & \multicolumn{3}{S|}{36.394198}\\
                                  \cline{2-4}
                                  & 12 & 0.0 & 0.0 \\
\hline
\hline
\multirow{2}{*}{$0.05$}           & \multicolumn{3}{S|}{36.082635}\\
                                  \cline{2-4}
                                  & 800 & 0.0 & 0.01 \\
\hline
\end{tabular}
\end{center}

\label{table:evaluation_demosaicking_params}
\vspace{-0.27cm}
\end{table}

    \subsection{Deconvolution}
    \begin{figure*}[t]
    \setlength{\unitlength}{0.119\textwidth}
    \centering


\subfloat{
\begin{picture}(1,1) 
\put(0,0){\includegraphics[width=\unitlength]{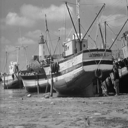}} 
\put(0.5,0.89){\makebox(0,0)[B]{\footnotesize \contour{black}{\textcolor{white}{Original}}}} 
\end{picture}

\begin{picture}(1,1) 
\put(0,0){\includegraphics[width=\unitlength]{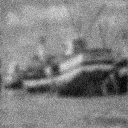}} 
\put(0.5,0.1){\makebox(0,0)[B]{\footnotesize \contour{black}{\textcolor{white}{21.95 dB}}}} 
\put(0.5,0.89){\makebox(0,0)[B]{\footnotesize \contour{black}{\textcolor{white}{Blurred}}}} 
\end{picture}

\begin{picture}(1,1) 
\put(0,0){\includegraphics[width=\unitlength]{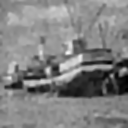}} 
\put(0.5,0.1){\makebox(0,0)[B]{\footnotesize \contour{black}{\textcolor{white}{24.29 dB}}}} 
\put(0.5,0.89){\makebox(0,0)[B]{\footnotesize \contour{black}{\textcolor{white}{IRLS}}}} 
\end{picture}

\begin{picture}(1,1) 
\put(0,0){\includegraphics[width=\unitlength]{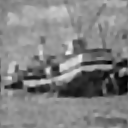}} 
\put(0.5,0.1){\makebox(0,0)[B]{\footnotesize \contour{black}{\textcolor{white}{24.32 dB}}}} 
\put(0.5,0.89){\makebox(0,0)[B]{\footnotesize \contour{black}{\textcolor{white}{LUT}}}} 
\end{picture}

\begin{picture}(1,1) 
\put(0,0){\includegraphics[width=\unitlength]{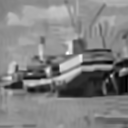}} 
\put(0.5,0.1){\makebox(0,0)[B]{\footnotesize \contour{black}{\textcolor{white}{24.47 dB}}}} 
\put(0.5,0.89){\makebox(0,0)[B]{\footnotesize \contour{black}{\textcolor{white}{IDD-BM3D}}}} 
\end{picture}

\begin{picture}(1,1) 
\put(0,0){\includegraphics[width=\unitlength]{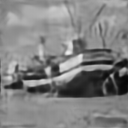}} 
\put(0.5,0.1){\makebox(0,0)[B]{\footnotesize \contour{black}{\textcolor{white}{24.60 dB}}}} 
\put(0.5,0.89){\makebox(0,0)[B]{\footnotesize \contour{black}{\textcolor{white}{MLP}}}} 
\end{picture}

\begin{picture}(1,1) 
\put(0,0){\includegraphics[width=\unitlength]{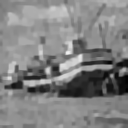}} 
\put(0.5,0.1){\makebox(0,0)[B]{\footnotesize \contour{black}{\textcolor{white}{24.44 dB}}}} 
\put(0.5,0.89){\makebox(0,0)[B]{\footnotesize \contour{black}{\textcolor{white}{FlexISP*}}}} 
\end{picture}

\begin{picture}(1,1) 
\put(0,0){\includegraphics[width=\unitlength]{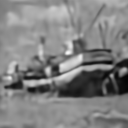}} 
\put(0.5,0.1){\makebox(0,0)[B]{\footnotesize \contour{black}{\textcolor{white}{24.41 dB}}}} 
\put(0.5,0.89){\makebox(0,0)[B]{\footnotesize \contour{black}{\textcolor{white}{Ours}}}} 
\end{picture}

} 

\subfloat{
\includegraphics[width=\unitlength]{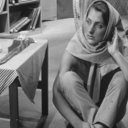} 

\begin{picture}(1,1)
\put(0,0){\includegraphics[width=\unitlength]{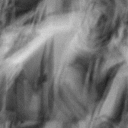}}
\put(0.5,0.1){\makebox(0,0)[B]{\footnotesize \contour{black}{\textcolor{white}{17.56 dB}}}} 
\end{picture}

\begin{picture}(1,1)
\put(0,0){\includegraphics[width=\unitlength]{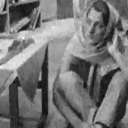}}
\put(0.5,0.1){\makebox(0,0)[B]{\footnotesize \contour{black}{\textcolor{white}{29.73 dB}}}} 
\end{picture}

\begin{picture}(1,1)
\put(0,0){\includegraphics[width=\unitlength]{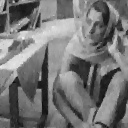}}
\put(0.5,0.1){\makebox(0,0)[B]{\footnotesize \contour{black}{\textcolor{white}{29.15 dB}}}} 
\end{picture}

\begin{picture}(1,1)
\put(0,0){\includegraphics[width=\unitlength]{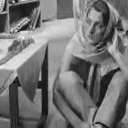}}
\put(0.5,0.1){\makebox(0,0)[B]{\footnotesize \contour{black}{\textcolor{white}{30.69 dB}}}} 
\end{picture}

\begin{picture}(1,1)
\put(0,0){\includegraphics[width=\unitlength]{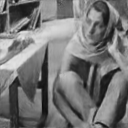}}
\put(0.5,0.1){\makebox(0,0)[B]{\footnotesize \contour{black}{\textcolor{white}{30.53 dB}}}} 
\end{picture}

\begin{picture}(1,1)
\put(0,0){\includegraphics[width=\unitlength]{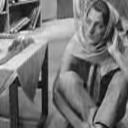}}
\put(0.5,0.1){\makebox(0,0)[B]{\footnotesize \contour{black}{\textcolor{white}{30.59 dB}}}} 
\end{picture}

\begin{picture}(1,1)
\put(0,0){\includegraphics[width=\unitlength]{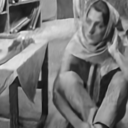}}
\put(0.5,0.1){\makebox(0,0)[B]{\footnotesize \contour{black}{\textcolor{white}{31.67 dB}}}} 
\end{picture}

} 
     
\caption{Visual comparison of different deconvolution methods on two out of 11 standard test images. The images \textit{Boat} and \textit{Barbara} were each corrupted with Gaussian noise ($\sigma = 0.04$, $\sigma = 0.01$) and a Gaussian blur (experiment \textit{a}) as well as a motion blur (experiment \textit{e}), respectively. Apart from \textit{FlexISP}$^*$ and our result, all other images are taken from \cite{heide14_flexisp}.}
\label{fig:image_comparison_deblurring}
\vspace{-0.27cm}
\end{figure*}

    For evaluating the deconvolution performance, we use the benchmark introduced by~\cite{schuler13_mlp}, which consists of 5 different experiments with different Gaussian noise and different blur kernels applied to 11 standard test images. Experiments \textit{a} - \textit{c}, \textit{d} and \textit{e} each apply a Gaussian, squared and motion blurring, respectively.
 
    \refTab{table:evaluation_deblurring} compares our average results over all test images with eight state-of-the-art deblurring methods, and \refFig{fig:image_comparison_deblurring} gives a visual impression of the corresponding results for two example images within experiments \textit{a} and \textit{e}. Apart from \textit{FlexISP$^*$} and our method, all other results are taken from \cite{heide14_flexisp}. For \textit{FlexISP$^*$} and our method, we used the TV as an explicit additional prior and optimized individual parameter sets for each experiment. 
    However, while \textit{FlexISP$^*$} benefits from a separately optimized stepsize $\gamma$, our method applies the same neural network for all experiments.
    Nevertheless, our overall performance is on par with the other methods. 
    
    Particularly remarkable is the fact that the MLP approach form~\cite{schuler13_mlp} trained a network (including the different linear operators) on each of the five experiments separately. It is encouraging to see that an energy minimization algorithm with a generic denoising network as a proximal operator yields results similar to the specialized networks in experiments \textit{a} - \textit{d} and even outperformed the latter on the problem \textit{e} of removing motion blurs. 

    \begin{table}[t]

\caption{Average PSNR results in [dB] for image deconvolution on a set of 11 standard grayscale images over 5 experiments with different blur kernels and noise levels as detailed in \cite{schuler13_mlp}. All reported values except \textit{FlexISP}$^*$ and ours were taken from \cite{heide14_flexisp}. Note that we used exactly the same denoising network ($\sigma= 0.02$) for all experiments opposed to MLP, which trained specialized neural networks removing the different corruptions of experiments \textit{a}--\textit{e} separately. We conclude that only very little performance has to be scarified when combining a generic but powerful denoising network with the flexibility of energy minimization algorithms.}
\vspace{-0.27cm}
\begin{center}
\resizebox{\linewidth}{!}{%

\begin{tabular}{| c | *{6}{S |}}
\hline
\multirow{2}{*}{Deblurring method}      & \multicolumn{6}{c|}{Reconstruction PSNR in [\textit{dB}]} \\
                                          \cline{2-7}
                                        & {\textit{a}}    & {\textit{b}}    & {\textit{c}}    & {\textit{d}}    & {\textit{e}}    & {AVG} \\

\hline
\hline

EPLL~\cite{zoran11_epll}               & 24.04              & 26.64             & 21.36             & 21.04             & 29.25             & 24.47 \\
IRLS ~\cite{levin07}                   & 24.09              & 26.51             & 21.72             & 21.91             & 28.33             & 24.51 \\
LUT ~\cite{krishnan09}                 & 24.17              & 26.60             & 21.73             & 22.07             & 28.17             & 24.55 \\
DEB-BM3D~\cite{dabov08_deb_bm3d}       & 24.19              & 26.30             & 21.48             & 22.20             & 28.26             & 24.49 \\
IDD-BM3D~\cite{danielyan12_idd_bm3d}   & 24.68              & 27.13             & 21.99             & 22.69             & 29.41             & 25.18 \\
FoE~\cite{roth09_foe}                  & 24.07              & 26.56             & 21.61             & 22.04             & 28.83             & 24.62 \\
MLP~\cite{schuler13_mlp}               & \textbf{24.76}     & \textbf{27.23}    & \textbf{22.20}    & \textbf{22.75}    & 29.42             & \textbf{25.27} \\
FlexISP$^*$~\cite{heide14_flexisp}     & 24.32491818        & 26.84293636       & 21.99175455       & 22.52678182       & 29.29994545       & 24.997267 \\
Ours & 24.51              & 27.08             & 21.83             & 21.96             & \textbf{30.17}             & 25.11 \\
\hline
\hline
Ours,{\setlength{\thickmuskip}{0mu} $\sigma= 0.01$}
 &24.2516454545455 &  27.0142363636364 &   21.5652909090909  &  21.5201363636364  &  28.7750909090909 & 24.6253 \\

Ours,{\setlength{\thickmuskip}{0mu} $\sigma= 0.04$}
    & 24.5639090909091  & 27.1025727272727 &   21.9527727272727 &   22.4001090909091  &  30.3493545454545             & 25.2737 \\
Ours,{\setlength{\thickmuskip}{0mu} $\sigma= 0.06$}
    & 24.6150363636364  & 27.1418272727273 &     22.0296454545455 &   22.5804272727273  &  30.2616545454545           & 25.3257 \\
Ours,{\setlength{\thickmuskip}{0mu} $\sigma= 0.09$}
    & 24.5745636363636  &  27.1319363636364  &  21.9779181818182 &   22.6046181818182 &    29.9638181818182             & 25.2506 \\
Ours,{\setlength{\thickmuskip}{0mu} $\sigma= 0.2$}
    & 24.4809090909091 & 26.6250363636364   & 21.9973545454545 &   22.3517454545455 &   25.7952818181818             & 24.2501 \\

\hline
\end{tabular}

}

\end{center}

\label{table:evaluation_deblurring}
\vspace{-0.27cm}
\end{table}

When comparing to the \textit{FlexISP$^*$} results it is interesting to see that the performance advantage our denoising networks have over BM3D on plain denoising did not fully carry over to the deconvolution problem, yielding a comparably small difference in PSNR value. Therefore, a detailed understanding for which problems and in what sense the performance of a denoising algorithm can be fully transferred to an inverse problem when the algorithm is used as a proximal operator remains an open question for future research. 

Due to the efficiency of the neural network, the average runtime of our approach for image deconvolution was ${\approx}2.5s$ in comparison to ${\approx}4s$ of \textit{FlexISP$^*$} yielding a significant relative improvement of $37.5\%$. In both cases the denoising operator was evaluated on the GPU.

We again study the robustness of the proposed approach to networks trained on different noise levels. The second plot of ~\refTab{table:evaluation_deblurring} shows the optimal PSNR values attained with networks that have been trained on different standard deviations $\sigma$.  As we can see the PSNRs remain very stable over a large range of different $\sigma$ indicating the robustness toward the specific network that is used.




  
  

\section{Conclusion}
    In this paper we studied the use of denoising neural networks as proximal operators in energy minimization algorithms. We showed that four different algorithms using neural networks as proximal operators have the same potential fixed-points. Moreover, the particular choice of step size in the PDHG algorithm merely rescales the data fidelity (and other possible regularization) parameters. Interestingly, the noise level the neural network is trained on behaves very much like a regularization parameter derived from MAP estimates and reveals a quadratic relation between the standard deviation $\sigma$ and the data fidelity parameter. 

    For our numerical experiments we proposed to combine the PDHG algorithm with a DnCNN-S denoising network \cite{zhang16_denoising} as a proximal operator and the prior stacking approach of \cite{heide14_flexisp}. 
    Our reconstruction results and robustness tests on the exemplary problems of demosaicking and deblurring indicate that one can obtain state-of-the-art results with a fixed neural network.

    We expect that this concept can significantly ease the need for problem-specific retraining of classical deep learning approaches and additionally even allows to benefit from learned natural image priors for problems where training data is not available. 

\small
\PAR{Acknowledgements.} M.M. and D.C. acknowledge the support of the German Research Foundation (DFG) via the research training group GRK 1564 Imaging New Modalities and the ERC Consolidator Grant ``3D-Reloaded'', respectively.


\clearpage

{\small
\bibliographystyle{ieee}
\bibliography{deep_variationals}
}

\ifarxiv
    \pagenumbering{gobble}
    \setcounter{section}{0}
    \setcounter{figure}{0}
    \setcounter{table}{0}
    \setcounter{equation}{0}
\title{Learning Proximal Operators: \\ Using Denoising Networks for Regularizing Inverse Imaging Problems \\ {\normalfont Supplementary Material}}

\ifarxiv
    \date{}
\fi

\maketitle
\thispagestyle{empty}

\begin{abstract}
The supplementary material contains the proof of Remark~3.1 as well as some additional information about the numerical experiments that contribute to the understanding of the main paper. We present detailed qualitative and quantitative evaluation results for each of our two (demosaicking and deconvolution) exemplary linear inverse image reconstruction problems. These results include parameter values obtained with our grid search, reconstruction PSNR values and images.
\end{abstract}

\section*{Proof of Remark~3.1}
For the sake of readability let us restate the remark and the four algorithms with the proximal operators of the regularization $R$ replaced by an arbitrary continuous function $\mathcal{G}$.

\begin{myequ}{PG}
\begin{align}
    \label{eq_supp:proxGradNeuralNet}
    u^{k+1} = \mathcal{G}\left(u^k - \tau A^* \nabla H_f(Au^k) \right).
    \end{align}
\end{myequ}

\begin{myequ}{ADMM}
 \begin{align}
    \label{eq_supp:admm1}
    u^{k+1} =& \text{prox}_{\frac{1 }{\gamma} (H_f \circ A)}\left(v^{k+1}-\frac{1 }{\gamma}y^{k}\right), \\
            \label{eq_supp:admm2}
    v^{k+1} =& \mathcal{G}\left(u^{k}+\frac{1 }{\gamma}y^k\right), \\
    \label{eq_supp:admm3}
    y^{k+1} =& y^k + \gamma(u^{k+1} - v^{k+1}),
    \end{align}
\end{myequ}
\begin{myequ}{PDHG1}
\begin{align}
    \label{eq_supp:pdhg1}
    z^{k+1} =& z^{k} + \gamma A\bar{u}^k - \gamma \text{prox}_{\frac{1}{\gamma}H_f}\left(\frac{1}{\gamma} z^{k} + A\bar{u}^k\right), \\
    \label{eq_supp:pdhg2}
    y^{k+1} =& y^{k} + \gamma \bar{u}^k - \gamma \mathcal{G}\left(\frac{1}{\gamma} y^{k} + \bar{u}^k\right), \\
    \label{eq_supp:pdhg3}
    u^{k+1} =& u^k - \tau A^T z^{k+1} - \tau y^{k+1} ,\\
        \label{eq_supp:pdhg4}
    \bar{u}^{k+1}=& u^{k+1} + \theta(u^{k+1} -u^{k}),
    \end{align}
    \end{myequ}
\begin{myequ}{PDHG2}
  \begin{align}
    \label{eq_supp:pdhg2b}
    y^{k+1} =& y^{k} + \gamma \bar{u}^k - \gamma \mathcal{G}\left(\frac{1}{\gamma} y^{k} + \bar{u}^k\right), \\
    \label{eq_supp:pdhg3b}
    u^{k+1} =& \text{prox}_{\tau (H_f \circ A)}(u^k - \tau y^{k+1}) ,\\
    \label{eq_supp:pdhg4b}
    \bar{u}^{k+1}=& u^{k+1} + \theta(u^{k+1} -u^{k}).
    \end{align}
     \end{myequ}
 \begin{remark}[Remark 3.1 in main Paper]
    \label{prop_supp:stationaryPoints}
    Consider replacing the proximal operator of $R$ in the PG, ADMM, PDHG1, and PDHG2 methods by an arbitrary continuous function $\mathcal{G}$. Then the fixed-point equations of all four resulting algorithmic schemes are equivalent, and yield  
%
    \begin{align}
    \label{eq_supp:fixedPoint}
     u_* = \mathcal{G}\left(u_* - t A^T\nabla H_f (Au_*)\right)
    \end{align}
    with $* \in \{ \text{PG}, \text{ADMM}, \text{PDHG1}, \text{PDHG2}\}$ and $t=\tau$ for PG and PDHG2, and $t=\frac{1}{\gamma}$ for ADMM and PDHG1. 
\end{remark}
\begin{proof}

For the PG-based algorithmic scheme the statement follows immediately as \eqref{eq_supp:fixedPoint} coincides with the update equation \eqref{eq_supp:proxGradNeuralNet}. 

At fixed-points of the ADMM-based scheme, it follows from~\refEq{eq_supp:admm3} that $u_{ADMM} = v$. The optimality condition for~\refEq{eq_supp:admm1} therefore becomes $y = -A^T\nabla H_f(Au_{ADMM})$, such that~\refEq{eq_supp:admm2} shows the fixed-point~\refEq{eq_supp:fixedPoint} for the ADMM-based scheme. Vice versa, for any given element $u^0$ meeting~\refEq{eq_supp:fixedPoint} one initializes $y^0 = -A^T\nabla H_f(Au^0)$, and $v^0 = u^0$ to obtain a fixed-point of the ADMM-based scheme. 

At fixed-points of the PDHG1-based scheme (again variables without superscripts denoting the fixed-point), it follows from~\refEq{eq_supp:pdhg3} that $y = - A^Tz$. The optimality condition for~\refEq{eq_supp:pdhg1} yields
    \begin{align}
    & 0=Au - \frac{1}{\gamma}z - Au + \frac{1}{\gamma}\nabla H_f(Au), \\
    \Rightarrow \qquad & z=\nabla H_f(Au) ,
    \end{align}
    and inserting the resulting identity $y = -A^T\nabla H_f(Au)$ into~\refEq{eq_supp:pdhg2} shows that any fixed-point of the PDHG1-based scheme meets~\refEq{eq_supp:fixedPoint}. For a given fixed-point $u^0$ meeting~\refEq{eq_supp:fixedPoint} the choices $\bar{u}^0=u^0$, $z^0 = \nabla H_f(Au^0)$, $y^0 = -A^T \nabla H_f(Au^0)$ yield a fixed-point of the PDHG1-based algorithmic scheme.

Finally, for the PDHG2-based scheme~\refEq{eq_supp:pdhg3b} yields $ y = - A^T\nabla H_f(Au)$, such that~\refEq{eq_supp:pdhg3b} yields the fixed-point~\refEq{eq_supp:fixedPoint}. Again, initializing $\bar{u}^0=u^0$ with the fixed-point and setting $y^0 = -A^T \nabla H_f(Au^0)$ results in a fixed-point of the PDHG2-based scheme and therefore yields the assertion.
    \end{proof}
    
    \textbf{Remark. } We would like to point out that the PDHG2 algorithm is closely related to ADMM: In fact, with an overrelaxation on the variable $y$, a reversed update order of $u$ and $y$, and $\tau = \frac{1}{\gamma}, ~ \theta = 1$, it is equivalent to the above ADMM algorithm in the convex case with proximity operators, see e.g. \cite{ChambollePockContOpti}, Section 5.3. Interestingly, one can show that this result still remains valid for our algorithmic schemes above in which the proximity operator has been replaced by a neural network. 
\clearpage

\section*{Evaluation}
\subsection*{Demosaicking}
We evaluated the effectiveness of our approach on noise free demosaicking of 18 Bayer filtered images of the \textit{McMaster} color image dataset, ~\cite{zhang2011_mcmaster}. For visualization purposes \refFig{fig:all_images_demosaicking} presents demosaicking results obtained with our approach applying the fixed denoising network trained on noise with standard deviation $\sigma = 0.02$. The images include a magnified area of the residual error which illustrates the varying demosaicking performance on differently structured parts of the image. In completion of Figure 4 of the main paper \refTab{table:all_psnrs_demosaicking} contains a comprehensive list of channel-wise PSNR values for each of the 18 color images. The superior reconstruction of the green channel can be attributed to its dominance in the \textit{RGGB} filter pattern. For a full comparison of our results with the state-of-the-art methods mentioned in the main paper we refer to the supplementary material of \cite{heide14_flexisp} and \cite{gharbi16_demosaicking}.


\begin{table}

\caption{Channel-wise PSNRs in [dB] for each Bayer filtered image of the \textit{McMaster} color image dataset. Our method applies the fixed denoising network trained on $\sigma = 0.02$.}
\label{table:all_psnrs_demosaicking}
\centering
\scriptsize
\setlength{\unitlength}{0.15\textwidth}

\newcommand{\row}[7]{
\multirow{3}{*}{\textit{#1}}
    &{R}&#2&\textbf{#5}\\
    \cline{2-4}
    &{G}&#3&\textbf{#6}\\
    \cline{2-4}
    &{B}&#4&\textbf{#7}\\ 
    \cline{2-4}
}

\resizebox{0.8\linewidth}{!}{%
\begin{tabular}{|c|c|c|c|}%
\hline
\multirow{2}{*}{Image} & \multirow{2}{*}{Channel} & \multicolumn{2}{c|}{Reconstruction PSNR in [\textit{dB}]} \\
                       \cline{3-4}
                       & & {FlexISP$^*$} & {Ours} \\
\hline
\row{1}{28.52}{31.55}{26.71}{29.09}{32.04}{27.01}
\hline
\row{2}{33.86}{38.39}{32.18}{34.69}{39.30}{32.85}
\hline
\row{3}{32.31}{35.56}{29.80}{34.33}{36.83}{30.81}
\hline
\row{4}{35.77}{39.90}{32.92}{38.55}{41.08}{34.47}
\hline
\row{5}{34.68}{37.30}{30.67}{35.31}{37.71}{31.65}
\hline
\row{6}{37.12}{41.69}{34.40}{39.38}{43.09}{36.44}
\hline
\row{7}{35.35}{38.31}{33.55}{35.89}{38.62}{33.85}
\hline
\row{8}{35.95}{40.35}{35.56}{38.42}{41.80}{37.18}
\hline
\row{9}{34.76}{40.74}{35.78}{36.78}{41.81}{36.86}
\hline
\row{10}{37.31}{41.61}{36.62}{37.57}{41.54}{36.90}
\hline
\row{11}{38.71}{41.23}{37.90}{39.92}{42.19}{38.54}
\hline
\row{12}{37.96}{40.52}{35.56}{38.46}{41.60}{37.22}
\hline
\row{13}{40.49}{44.74}{37.84}{42.46}{45.46}{38.68}
\hline
\row{14}{38.07}{42.65}{35.88}{39.13}{43.06}{36.25}
\hline
\row{15}{36.77}{42.34}{38.42}{37.26}{42.58}{38.90}
\hline
\row{16}{32.48}{34.05}{32.61}{34.16}{35.19}{32.65}
\hline
\row{17}{31.84}{36.57}{31.77}{33.37}{37.40}{32.30}
\hline
\row{18}{32.78}{36.15}{34.17}{34.02}{36.92}{35.09}
\hline
\hline
\row{AVG}{35.26}{39.09}{34.02}{\textbf{36.60}}{\textbf{39.90}}{\textbf{34.87}}
\hline
\hline
AVG & RGB & 36.12 & \textbf{37.12} \\
\hline
\end{tabular}
}
\vspace{-0.27cm}
\end{table}

\begin{figure*}

\centering
\contourlength{0.75pt}
\setlength{\fboxsep}{0pt}
\setlength{\fboxrule}{0.01\unitlength}
\setlength{\unitlength}{0.15\textwidth}

\newcommand{\img}[2]{
\begin{picture}(1,1) 
\ifarxiv
    \put(0,0){\includegraphics[width=\unitlength]{supp/all_images_demosaicking/#1_small}}
    \put(0.05,0.54){\fcolorbox{white}{white}{\includegraphics[trim={200 175 200 225},clip,width=0.4\unitlength]{supp/all_images_demosaicking/#1_diff_small}}}
\else
    \put(0,0){\includegraphics[width=\unitlength]{supp/all_images_demosaicking/#1}}
    \put(0.05,0.54){\fcolorbox{white}{white}{\includegraphics[trim={200 175 200 225},clip,width=0.4\unitlength]{supp/all_images_demosaicking/#1_diff}}}
\fi
\put(0.88,0.88){\makebox(0,0)[B]{\large \contour{black}{\textcolor{white}{#2}}}}  
\end{picture}
}

\resizebox{\textwidth}{!}{%
\begin{tabular}{*{6}{m{\unitlength}}}%

\img{1_ours}{1} & \img{2_ours}{2} & \img{3_ours}{3} & \img{4_ours}{4} & \img{5_ours}{5} & \img{6_ours}{6} \\
\img{7_ours}{7} & \img{8_ours}{8} & \img{9_ours}{9} & \img{10_ours}{10} & \img{11_ours}{11} & \img{12_ours}{12} \\
\img{13_ours}{13} & \img{14_ours}{14} & \img{15_ours}{15} & \img{16_ours}{16} & \img{17_ours}{17} & \img{18_ours}{18} \\

\end{tabular}
}

\caption{We demosaicked 18 images of the \textit{McMaster} color image dataset applying our approach with the fixed denoising network. To illustrate the remaining reconstruction error we added magnified residual images. To avoid boundary effects the images were cropped by $5$ pixels.}
\label{fig:all_images_demosaicking}
\vspace{-0.27cm}
\end{figure*}

\subsection*{Deconvolution}
Our experimental setup consists of the five (\textit{a} - \textit{e}) deconvolution experiments proposed in \cite{schuler13_mlp}. These experiments corrupt 11 standard test images with different blur kernels and Gaussian noise levels. \refFig{fig:all_images_deblurring} shows the corresponding dataset as well as exemplary deconvolution results obtained by our approach using the fixed network trained on noise with standard deviation $\sigma = 0.02$. The corresponding PSNR values as well as our FlexISP$^*$ results are presented in \refTab{table:all_psnrs_deblurring}. A detail explanation of FlexISP$^*$, our reimplementation of \cite{heide14_flexisp}, can be found in the main paper. To illustrate the robustness with respect to the choice of network we also included the results for networks trained on different $\sigma$. For a comprehensive comparison with the methods mentioned in the paper we again refer to the supplementary material of \cite{heide14_flexisp}. For the sake of reproducibility \refTab{table:evaluation_deblurring_params} includes the results of our grid search for the data fidelity parameter $\alpha$ as well as for the regularization parameter $\beta_{TV}$ for multiple networks.


\begin{table*}

\caption{The optimal deblurring values for the data fidelity parameter $\alpha$ as well as for the regularization parameter $\beta_{TV}$ with respect to our method applying denoising networks trained on different noise standard deviations $\sigma$. All values were obtained by performing and extensive grid search of the parameter space. Following Proposition~3.2 we set the dual step size of the PDHG algorithm to $\gamma = 1.0$ and determined the primal step size $\tau$ from $\tau \gamma < c$ with $c$ being the squared norm of the involved linear operator.}
\label{table:evaluation_deblurring_params}

\centering

\begin{tabular}{|l *{10}{|c} |}
\hline
\multirow{2}{*}{$\sigma$}           & \multicolumn{2}{c|}{Experiment \textit{a}}
                                    & \multicolumn{2}{c|}{Experiment \textit{b}}
                                    & \multicolumn{2}{c|}{Experiment \textit{c}}
                                    & \multicolumn{2}{c|}{Experiment \textit{d}}
                                    & \multicolumn{2}{c|}{Experiment \textit{e}}\\
                                    \cline{2-11}
                                    & {$\alpha$} & {$\beta_{TV}$}
                                    & {$\alpha$} & {$\beta_{TV}$}
                                    & {$\alpha$} & {$\beta_{TV}$}
                                    & {$\alpha$} & {$\beta_{TV}$}
                                    & {$\alpha$} & {$\beta_{TV}$}\\
\hline
\hline0.01& 1 & 0.00 & 25 & 0.00 & 40 & 0.05 & 250 & 0.01 & 10 & 0.00 \\
\hline0.02& 2 & 0.00 & 75 & 0.00 & 4 & 0.00 & 73 & 0.00 & 23 & 0.00 \\
\hline0.03& 5 & 0.00 & 149 & 0.00 & 7 & 0.00 & 107 & 0.00 & 43 & 0.00 \\
\hline0.04& 7 & 0.00 & 200 & 0.00 & 10 & 0.00 & 140 & 0.00 & 64 & 0.00 \\
\hline0.05& 11 & 0.01 & 160 & 0.01 & 13 & 0.00 & 200 & 0.00 & 93 & 0.00 \\
\hline0.06& 13 & 0.00 & 200 & 0.01 & 17 & 0.00 & 240 & 0.00 & 120 & 0.00 \\
\hline0.07& 16 & 0.00 & 424 & 0.00 & 24 & 0.00 & 272 & 0.00 & 150 & 0.00 \\
\hline0.08& 23 & 0.00 & 467 & 0.00 & 34 & 0.00 & 467 & 0.00 & 200 & 0.00 \\
\hline0.09& 24 & 0.00 & 300 & 0.01 & 36 & 0.00 & 600 & 0.00 & 267 & 0.00 \\
\hline0.20& 100 & 0.00 & 800 & 0.03 & 150 & 0.00 & 2400 & 0.00 & 480 & 0.10 \\
\hline
\end{tabular}


\end{table*}

\begin{figure*}[t]

\centering

\setlength{\unitlength}{0.1\textwidth}
\setlength{\fboxsep}{0pt}%
\setlength{\fboxrule}{0.12\unitlength}
\newcolumntype{M}[1]{>{\centering\arraybackslash}m{#1}}
\newcommand{\img}[1]{\includegraphics[width=\unitlength]{supp/#1}}

\newcommand{\row}[2]{
\multirow{2}{*}{\rotatebox[origin=lt]{90}{Experiment \textit{#1}}} & #2 &\img{all_images_deblurring/experiment_#1_barbara_blurred}&\img{all_images_deblurring/experiment_#1_boat_blurred}&\img{all_images_deblurring/experiment_#1_cameraman_blurred}&\img{all_images_deblurring/experiment_#1_couple_blurred}&\img{all_images_deblurring/experiment_#1_fingerprint_blurred}&\img{all_images_deblurring/experiment_#1_hill_blurred}&\img{all_images_deblurring/experiment_#1_house_blurred}&\img{all_images_deblurring/experiment_#1_lena_blurred}&\img{all_images_deblurring/experiment_#1_man_blurred}&\img{all_images_deblurring/experiment_#1_montage_blurred}&\img{all_images_deblurring/experiment_#1_peppers_blurred}
\\
& &\img{all_images_deblurring/experiment_#1_barbara_ours}&\img{all_images_deblurring/experiment_#1_boat_ours}&\img{all_images_deblurring/experiment_#1_cameraman_ours}&\img{all_images_deblurring/experiment_#1_couple_ours}&\img{all_images_deblurring/experiment_#1_fingerprint_ours}&\img{all_images_deblurring/experiment_#1_hill_ours}&\img{all_images_deblurring/experiment_#1_house_ours}&\img{all_images_deblurring/experiment_#1_lena_ours}&\img{all_images_deblurring/experiment_#1_man_ours}&\img{all_images_deblurring/experiment_#1_montage_ours}&\img{all_images_deblurring/experiment_#1_peppers_ours}\\\\
}

\resizebox{\textwidth}{!}{%
\begin{tabular}{m{0.05\unitlength} *{12}{m{\unitlength}}}%
&{Blur kernel} & \textit{Barbara} & \textit{Boat} & \textit{Cameraman} & \textit{Couple} & \textit{Fingerprint} & \textit{Hill} & \textit{House} & \textit{Lena} & \textit{Man} & \textit{Montage} & \textit{Peppers} \\
\row{a}{\img{blur_kernel_a}}
\row{b}{\img{blur_kernel_b}}
\row{c}{\img{blur_kernel_b}}
\row{d}{\fcolorbox{black}{white}{\includegraphics[width=0.76\unitlength]{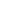}}}
\row{e}{\img{blur_kernel_e}}
\end{tabular}
}

\caption{Our deconvolution dataset based on the experiments introduced in~\cite{schuler13_mlp}. Each image ($128\times 128$ pixels) is shown in its corrupted as well as by our approach reconstructed version. The deblurring was performed using the fixed denoising network trained on $\sigma = 0.02$. To avoid boundary effects the images were cropped by $12$ pixels. For visualization purposes we show enlarged versions of the different blur kernels.}
\label{fig:all_images_deblurring}
\vspace{-0.27cm}
\end{figure*}
\begin{table*}[t]

\caption{Imagewise PSNRs in [dB] for each of our 5 (\textit{a} - \textit{e}) deconvolution experiments for FlexISP$^*$ and multiple versions of our approach using denoising networks trained on different $\sigma$. Our application independent approach applied a network  trained on $\sigma = 0.02$.}

\label{table:all_psnrs_deblurring}
\centering

\setlength{\unitlength}{0.1\textwidth}
\newcolumntype{M}[1]{>{\centering\arraybackslash}m{#1}}

\resizebox{\textwidth}{!}{%
\begin{tabular}{c|c| *{11}{c |}}%
\cline{2-13}
\multirow{2}{*}{} &\multirow{2}{*}{Method} & \multicolumn{11}{c|}{Reconstruction PSNR in [\textit{dB}]} \\
                        \cline{3-13}
                        & & \textit{Barbara} & \textit{Boat} & \textit{Cameraman} & \textit{Couple} & \textit{Fingerprint} & \textit{Hill} & \textit{House} & \textit{Lena} & \textit{Man} & \textit{Montage} & \textit{Peppers} \\

\cline{2-13}

\multirow{7}{*}{\rotatebox[origin=lt]{90}{Experiment \textit{a}}}
    &FlexISP$^*$~\cite{heide14_flexisp}&25.93           &\textbf{24.44} &23.65         &\textbf{24.16}&\textbf{17.43}&25.83         &26.93        &25.05         &24.90         &22.84         &26.41\\
    &Ours                              & \textbf{26.27} &24.41          &\textbf{23.78}&24.15         &17.41         &\textbf{25.89}&\textbf{27.35}&\textbf{25.34}&\textbf{25.02}&\textbf{23.00}&\textbf{26.99}\\
    \cline{2-13}
    &Ours,{\setlength{\thickmuskip}{0mu} $\sigma= 0.01$}&25.97&24.34&23.40&24.13&17.41&25.78&26.53&24.95&24.88&22.89&26.49\\
    &Ours,{\setlength{\thickmuskip}{0mu} $\sigma= 0.04$}&26.19&24.48&23.93&24.26&17.43&25.95&27.38&25.42&25.12&22.97&27.06\\
    &Ours,{\setlength{\thickmuskip}{0mu} $\sigma= 0.06$}&26.32&24.46&23.97&24.27&17.44&25.98&27.56&25.51&25.13&23.02&27.12\\
    &Ours,{\setlength{\thickmuskip}{0mu} $\sigma= 0.09$}&26.27&24.42&23.99&24.27&17.44&26.03&27.03&25.60&25.17&23.06&27.04\\
    &Ours,{\setlength{\thickmuskip}{0mu} $\sigma= 0.20$}&26.17&24.31&23.79&24.17&17.43&25.76&27.32&25.50&25.03&22.85&26.95\\

\cline{2-13}

\multirow{7}{*}{\rotatebox[origin=lt]{90}{Experiment \textit{b}}}
    &FlexISP$^*$~\cite{heide14_flexisp}&29.14         &26.62         &26.00         &26.55         &17.81         &28.70         &30.99    &27.90      &27.38\        &24.47&29.72\\
    &Ours                              &\textbf{29.38}&\textbf{26.74}&\textbf{26.26}&\textbf{26.70}&\textbf{17.86}&\textbf{28.81}&\textbf{31.43}&\textbf{28.27}&\textbf{27.58}&\textbf{24.70}&\textbf{30.13}\\
    \cline{2-13}
    &Ours,{\setlength{\thickmuskip}{0mu} $\sigma= 0.01$}&29.36&26.66&26.05&26.64&17.82&28.87&31.24&28.17&27.60&24.55&30.19\\
    &Ours,{\setlength{\thickmuskip}{0mu} $\sigma= 0.04$}&29.40&26.70&26.28&26.71&17.85&28.82&31.52&28.40&27.64&24.66&30.14\\
    &Ours,{\setlength{\thickmuskip}{0mu} $\sigma= 0.06$}&29.52&26.79&26.37&26.74&17.83&28.91&31.39&28.37&27.74&24.62&30.27\\
    &Ours,{\setlength{\thickmuskip}{0mu} $\sigma= 0.09$}&29.49&26.77&26.32&26.70&17.84&28.86&31.60&28.39&27.72&24.51&30.24\\
    &Ours,{\setlength{\thickmuskip}{0mu} $\sigma= 0.20$}&29.13&26.33&25.17&26.42&17.75&28.51&30.63&27.98&27.38&23.80&29.79\\

\cline{2-13}

\multirow{7}{*}{\rotatebox[origin=lt]{90}{Experiment \textit{c}}}
    &FlexISP$^*$~\cite{heide14_flexisp}&\textbf{23.24}&\textbf{22.11}&\textbf{21.01}&\textbf{22.04}&\textbf{17.04}&23.05         &\textbf{23.57}&\textbf{22.57}&22.43&\textbf{21.38}&\textbf{23.47}\\
    &Ours                              &23.12         &22.01         &20.85         &21.93         &17.02         &\textbf{23.12}&22.77&22.43&\textbf{22.49}&21.22&23.19\\
    \cline{2-13}
    &Ours,{\setlength{\thickmuskip}{0mu} $\sigma= 0.01$}&22.49&21.77&20.96&21.75&17.07&22.83&22.64&22.02&22.27&20.91&22.51\\
    &Ours,{\setlength{\thickmuskip}{0mu} $\sigma= 0.04$}&23.03&22.18&21.20&21.89&17.03&23.12&23.26&22.64&22.41&21.43&23.29\\
    &Ours,{\setlength{\thickmuskip}{0mu} $\sigma= 0.06$}&23.02&22.23&21.27&21.98&17.06&23.18&23.62&22.51&22.49&21.40&23.56\\
    &Ours,{\setlength{\thickmuskip}{0mu} $\sigma= 0.09$}&23.15&22.20&21.19&21.93&17.09&23.12&23.50&22.28&22.53&21.33&23.45\\
    &Ours,{\setlength{\thickmuskip}{0mu} $\sigma= 0.20$}&23.07&22.21&21.42&21.97&17.06&23.04&23.20&22.48&22.63&21.32&23.57\\

\cline{2-13}

\multirow{7}{*}{\rotatebox[origin=lt]{90}{Experiment \textit{d}}}
    &FlexISP$^*$~\cite{heide14_flexisp}&\textbf{23.13}&\textbf{22.92}&\textbf{21.92}&\textbf{22.87}&\textbf{17.44}&\textbf{23.88}&\textbf{24.95}&\textbf{22.57}&\textbf{22.33}&\textbf{22.19}&\textbf{23.59}\\
    &Ours                              &22.48         &22.45         &20.89         &22.69         &17.38         &23.53         &23.37&22.22&21.97&21.64&22.90\\
    \cline{2-13}
    &Ours,{\setlength{\thickmuskip}{0mu} $\sigma= 0.01$}&21.81&22.08&20.71&22.40&17.25&22.98&23.01&21.52&21.62&21.30&22.03\\
    &Ours,{\setlength{\thickmuskip}{0mu} $\sigma= 0.04$}&22.97&22.66&21.78&22.77&17.37&23.78&24.91&22.51&22.23&22.07&23.33\\
    &Ours,{\setlength{\thickmuskip}{0mu} $\sigma= 0.06$}&23.21&22.71&21.83&22.81&17.39&23.87&25.57&22.71&22.39&22.19&23.70\\
    &Ours,{\setlength{\thickmuskip}{0mu} $\sigma= 0.09$}&23.19&22.76&21.85&22.81&17.37&23.87&25.48&22.63&22.39&22.64&23.66\\
    &Ours,{\setlength{\thickmuskip}{0mu} $\sigma= 0.20$}&31.42&29.28&30.50&28.78&23.80&29.57&33.06&30.73&29.24&31.29&31.94\\

\cline{2-13}

\multirow{7}{*}{\rotatebox[origin=lt]{90}{Experiment \textit{e}}}
    &FlexISP$^*$~\cite{heide14_flexisp}&30.60&28.54&29.19&28.27&\textbf{23.59}&29.31&32.65&29.93&28.49&30.63&31.13\\
    &Ours                              &\textbf{31.67}&\textbf{29.24}&\textbf{30.84}&\textbf{28.85}&23.42&\textbf{29.69}&\textbf{33.38}&\textbf{30.80}&\textbf{29.15}&\textbf{32.45}&\textbf{32.36}\\
    \cline{2-13}
    &Ours,{\setlength{\thickmuskip}{0mu} $\sigma= 0.01$}&29.86&28.69&29.14&28.02&22.19&29.28&31.28&29.36&28.35&29.70&30.65\\
    &Ours,{\setlength{\thickmuskip}{0mu} $\sigma= 0.04$}&31.75&29.51&30.99&28.88&24.20&29.80&33.65&30.93&29.37&32.49&32.28\\
    &Ours,{\setlength{\thickmuskip}{0mu} $\sigma= 0.06$}&31.73&29.48&30.80&28.85&24.14&29.75&33.37&30.91&29.40&32.22&32.21\\
    &Ours,{\setlength{\thickmuskip}{0mu} $\sigma= 0.09$}&31.42&29.28&30.50&28.78&23.80&29.57&33.06&30.73&29.24&31.29&31.94\\
    &Ours,{\setlength{\thickmuskip}{0mu} $\sigma= 0.20$}&28.33&25.86&25.42&25.31&18.37&27.63&27.78&27.10&26.48&24.39&27.08\\

\cline{2-13}
\end{tabular}
}
\vspace{-0.27cm}
\end{table*}

\fi

\end{document}